%% LyX 2.1.4 created this file.  For more info, see http://www.lyx.org/.
%% Do not edit unless you really know what you are doing.
\documentclass[a4paper,english,british]{article}
\usepackage[T1]{fontenc}
\usepackage[latin9]{inputenc}
\usepackage{babel}
\usepackage{array}
\usepackage{multirow}
\usepackage{amsmath}
\usepackage{amsthm}
\usepackage{amssymb}
\usepackage{graphicx}
\usepackage[numbers]{natbib}
\usepackage[unicode=true]
 {hyperref}

\makeatletter

%%%%%%%%%%%%%%%%%%%%%%%%%%%%%% LyX specific LaTeX commands.
\pdfpageheight\paperheight
\pdfpagewidth\paperwidth

%% Because html converters don't know tabularnewline
\providecommand{\tabularnewline}{\\}

%%%%%%%%%%%%%%%%%%%%%%%%%%%%%% Textclass specific LaTeX commands.
\theoremstyle{plain}
\newtheorem{thm}{\protect\theoremname}[section]
  \theoremstyle{plain}
  \newtheorem{prop}[thm]{\protect\propositionname}
  \theoremstyle{plain}
  \newtheorem{lem}[thm]{\protect\lemmaname}

%%%%%%%%%%%%%%%%%%%%%%%%%%%%%% User specified LaTeX commands.
% from the icml 2016 example tex file
\usepackage{times}
\usepackage{algorithm}
\usepackage{algorithmic}
\usepackage{hyperref}
\usepackage[accepted]{icml2016} 
\usepackage{placeins}

\makeatother

  \addto\captionsbritish{\renewcommand{\lemmaname}{Lemma}}
  \addto\captionsbritish{\renewcommand{\propositionname}{Proposition}}
  \addto\captionsbritish{\renewcommand{\theoremname}{Theorem}}
  \addto\captionsenglish{\renewcommand{\lemmaname}{Lemma}}
  \addto\captionsenglish{\renewcommand{\propositionname}{Proposition}}
  \addto\captionsenglish{\renewcommand{\theoremname}{Theorem}}
  \providecommand{\lemmaname}{Lemma}
  \providecommand{\propositionname}{Proposition}
\providecommand{\theoremname}{Theorem}

\begin{document}
\twocolumn[ \icmltitle{A Kernel Test of Goodness of Fit}
% It is OKAY to include author information, even for blind
% submissions: the style file will automatically remove it for you
% unless you've provided the [accepted] option to the icml2015
% package.
\icmlauthor{Kacper Chwialkowski$^*$}{kacper.chwialkowski@gmail.com}
\icmlauthor{Heiko Strathmann$^*$}{heiko.strathmann@gmail.com}
\icmlauthor{Arthur Gretton}{arthur.gretton@gmail.com}
\icmladdress{Gatsby Unit, University College London, United Kingdom}
% You may provide any keywords that you 
% find helpful for describing your paper; these are used to populate
% the "keywords" metadata in the PDF but will not be shown in the document
\icmlkeywords{kernel methods, goodness-of-fit, Stein's method, statistical testing}
\vskip 0.3in ]

\begin{abstract}
We propose a nonparametric statistical test for goodness-of-fit:
given a set of samples, the test determines how likely it is that
these were generated from a target density function. The measure of
goodness-of-fit is a divergence constructed via Stein's method using
functions from a reproducing kernel Hilbert space. Our test statistic
is based on an empirical estimate of this divergence, taking the form
of a V-statistic in terms of the gradients of the log target density
 and of the kernel. We derive a statistical test, both for i.i.d.
and non-i.i.d. samples, where we estimate the null distribution quantiles
using a wild bootstrap procedure. We apply our test to quantifying
convergence of approximate Markov chain Monte Carlo methods, statistical
model criticism, and evaluating quality of fit vs model complexity
in nonparametric density estimation.
\end{abstract}
\selectlanguage{english}%
\global\long\def\ev{\mathbb{E}}

\selectlanguage{british}%

\section{Introduction}

Statistical tests of goodness-of-fit are a fundamental tool in statistical
analysis, dating back to the test of Kolmogorov and Smirnov \citep{Kolmogorov33,Smirnov48}.
Given a set of samples $\{Z_{i}\}_{i=1}^{n}$ with distribution $Z_{i}\sim q$,
our interest is in whether $q$ matches some reference or target distribution
$p$\foreignlanguage{english}{, which we assume to be only known up
to the normalisation constant. Recently, in the multivariate setting,
}\citet{gorham2015measuring} proposed an elegant measure of sample
quality with respect to a target. This measure is a maximum discrepancy
between empirical sample expectations and target expectations over
a large class of test functions, constructed so as to have zero expectation
over the target distribution by use of a Stein operator. This operator
depends only on the derivative of the $\log q$: thus, the approach
can be applied very generally, as it does not require closed-form
integrals over the target distribution (or numerical approximations
of such integrals). By contrast, many earlier discrepancy measures
require integrals with respect to the target (see below for a review).
This is problematic e.g.~if the intention is to perform benchmarks
for assessing Markov Chain Monte Carlo, since these integrals are
certainly not known to the practitioner.

A challenge in applying the approach of \citeauthor{gorham2015measuring}
is the complexity of the function class used, which results from applying
the Stein operator to the $W^{2,\infty}$ Sobolev space. Thus, their
sample quality measure requires solving a linear program that arises
from a complicated construction of graph Stein discrepancies and geometric
spanners. Their metric furthermore requires access to nontrivial lower
bounds that, despite being provided for log-concave densities, are
a largely open problem otherwise, in particular for multivariate cases.

An important application of a goodness-of-fit measure is in statistical
testing, where it is desired to determine whether the empirical discrepancy
measure is large enough to reject the null hypothesis (that the sample
arises from the target distribution). One approach is to establish
the asymptotic behaviour of the test statistic, and to set a test
threshold at a large quantile of the asymptotic distribution. The
asymptotic behaviour of the $W^{2,\infty}$-Sobolev Stein discrepancies
remains a challenging open problem, due to the complexity of the function
class used. It is not clear how one would compute p-values for this
statistic, or  determine when the goodness of fit test  allows to
accept the null hypothesis at a user-specified test level.

The key contribution of this work is to define a statistical test
of goodness-of-fit, based on a Stein discrepancy computed in a reproducing
kernel Hilbert space (RKHS).  To construct our test statistic, we
use a function class defined by applying the Stein operator to a chosen
space of RKHS functions, as proposed by \citep{OatGirCho15}.\footnote{\citeauthor{OatGirCho15} addressed the problem of variance reduction
in Monte Carlo integration, using the Stein operator to avoid bias. } Our measure of goodness of fit is the largest discrepancy over this
space of functions between empirical sample expectations and target
expectations (the latter being zero, due to the effect of the Stein
operator). The approach is a natural extension to goodness-of-fit
testing of the earlier kernel two-sample tests \citep{gretton2012kernel}
and independence tests \citep{gretton_kernel_2008}, which are based
on the \emph{maximum mean discrepancy}, an integral probability metric.
As with these earlier tests, our statistic is a simple V-statistic,
and can be computed in closed form and in quadratic time. Moreover,
it is an unbiased estimate of the corresponding population discrepancy.
As with all Stein-based discrepancies, only the gradient of the log
target density is needed; we do not require integrals with respect
to the target density -- including the normalisation constant. Given
that our test statistic is a V-statistic, we make use of the extensive
literature on asymptotics of V-statistics to formulate a hypothesis
test \citep{serfling80,leucht_dependent_2013}.\footnote{An alternative linear-time test, based on differences in analytic
functions of the sample and following the recent work of \citep{Chwialkowski2015},
is provided in \citep[Appendix 5.1]{chwialkowski2016kernel}} We provide statistical tests for both uncorrelated and correlated
samples, where the latter is essential if the test is used in assessing
the quality of output of an MCMC procedure.  An identical test was
obtained simultaneously in independent work by \citet{LiuLeeJor16},
for uncorrelated samples.

Several alternative approaches exist in the statistics literature
to goodness-of-fit testing. A first strategy is to partition the space,
and to conduct the test on a histogram estimate of the distribution
\citep{Bar89,Beirlant2,Gyorfi,GyVa02}.\foreignlanguage{english}{
Such space partitioning approaches can have attractive theoretical
properties (e.g. distribution-free test thresholds) and work well
in low dimensions, however they are much less powerful than alternatives
once the dimensionality increases \citep{GreGyo10}.} A second popular
approach has been to use the smoothed $L_{2}$ distance between the
empirical characteristic function of the sample, and the characteristic
function of the target density. This dates back to the test of Gaussianity
of \citet[Eq. 2.1]{BaringhausHenze88}, who used an exponentiated
quadratic  smoothing function. For this choice of smoothing function,
their statistic is identical to the maximum mean discrepancy (MMD)
with the exponentiated quadratic kernel, which can be shown using
the Bochner representation of the kernel \citep[Corollary 4]{SriGreFukLanetal10}.
It is essential in this case that the target distribution be Gaussian,
since the convolution with the kernel (or in the Fourier domain, the
smoothing function) must be available in closed form. An $L_{2}$
distance between Parzen window estimates can also be used  \citep{BowFos93},
giving the same expression again, although the optimal choice of bandwidth
for consistent Parzen window estimates may not be a good choice for
testing \citep{AndHalTit94}. A different smoothing scheme in the
frequency domain results in an energy distance statistic \citep[this likewise being an MMD with a particular choice of kernel; see ][]{SejSriGreFuk13},
which can be used in a test of normality \citep{SzeRiz05}. The key
point is that the required integrals are again computable in closed
form for the Gaussian, although the reasoning may be extended to certain
other families of interest, e.g. \citep{Rizzo09}. The requirement
of computing closed-form integrals with respect to the test distribution
severely restricts this testing strategy. Finally, a problem related
to goodness-of-fit testing is that of model criticism \citep{lloyd2015statistical}.
In this setting, samples generated from a fitted model are compared
via the maximum mean discrepancy with samples used to train the model,
such that a small MMD indicates a good fit. There are two limitation
to the method: first, it requires samples from the model (which might
not be easy if this requires a complex MCMC sampler); second, the
choice of number of samples from the model is not obvious, since too
few samples cause a loss in test power, and too many are computationally
wasteful. Neither issue arises in our test, as we do not require
model samples.

In our experiments, a particular focus is on applying our goodness-of-fit
test to certify the output of approximate Markov Chain Monte Carlo
(MCMC) samplers \citep{Korattikara2014,Welling2011,Bardenet2014}.
These methods use modifications to Markov transition kernels that
improve mixing speed at the cost of introducing asymptotic bias. The
resulting bias-variance trade-off can usually be tuned with parameters
of the sampling algorithms. It is therefore important to test whether
for a particular parameter setting and run-time, the samples are of
the desired quality. This question cannot be answered with classical
MCMC convergence statistics, such as the widely used potential scale
reduction factor (R-factor) \citep{gelman1992inference} or the effective
sample size, since these assume that the Markov chain reaches the
true equilibrium distribution i.e.~absence of asymptotic bias. By
contrast, our test exactly quantifies the asymptotic bias of approximate
MCMC. 

Code can be found at \href{https://github.com/karlnapf/kernel_goodness_of_fit}{https://github.com/karlnapf/kernel\_{}goodness\_{}of\_{}fit}.

\paragraph{Paper outline}

We begin in section \ref{sec:A-Kernel-Goodness-of-fit} with a high-level
construction of the RKHS-based Stein discrepancy and associated statistical
test. In Section \ref{sec:Details}, we provide additional details
and prove the main results. Section \ref{sec:experiment} contains
experimental illustrations on synthetic examples, statistical model
criticism, bias-variance trade-offs in approximate MCMC, and convergence
in non-parametric density estimation.

\selectlanguage{english}%

\section{Test Definition: Statistic and Threshold\label{sec:A-Kernel-Goodness-of-fit}}

We begin with a high-level construction of our divergence measure
 and the associated statistical test. While this section aims to
outline the main ideas, we provide details and proofs in Section \ref{sec:Details}.

\subsection{Stein Operator in RKHS}

Our goal is to write the maximum discrepancy between the target distribution
$p$ and observed sample distribution $q$ in  a \emph{modified}
RKHS, such that functions have zero expectation under $p$. Denote
by ${\cal F}$ the RKHS of real-valued functions on $\mathbb{R}^{d}$
with reproducing kernel $k$, and by ${\cal F}^{d}$ the product RKHS
consisting of elements $f:=(f_{1},\dots,f_{d})$ with $f_{i}\in{\cal F}$,
and with a standard inner product $\left\langle f,g\right\rangle _{\mathcal{F}^{d}}=\sum_{i=1}^{d}\left\langle f_{i},g_{i}\right\rangle _{\mathcal{F}}$.
We further assume that all measures considered in this paper are supported
on an open set, equal to zero on the border, and strictly positive\footnote{An example of such a space is the positive real line}
(so logarithms are well defined). Similarly to \citet{stein1972,gorham2015measuring,OatGirCho15},
we begin by defining a Stein operator $T_{p}$ acting on $f\in\mathcal{F}^{d}$
\[
(T_{p}f)(x):=\sum_{i=1}^{d}\left(\frac{\partial\log p(x)}{\partial x_{i}}f_{i}(x)+\frac{\partial f_{i}(x)}{\partial x_{i}}\right).
\]
Suppose a random variable $Z$ is distributed according to a measure\footnote{\selectlanguage{british}%
Throughout the article, all occurrences of $Z$, e.g. $Z',Z_{i},Z_{\heartsuit}$,
are understood to be distributed according to $q$.\selectlanguage{english}%
} $q$ and $X$ is distributed according to the target measure $p$.
As we will see, the operator can be expressed by defining a function
that depends on gradient of the log-density and the kernel, 
\begin{equation}
\xi_{p}(x,\cdot):=\left[\nabla\log p(x)k(x,\cdot)+\nabla k(x,\cdot)\right],\label{eq:xi}
\end{equation}
whose expected inner product with $f$ gives exactly the expected
value of the Stein operator, 
\[
\ev_{q}T_{p}f(Z)=\langle f,\ev_{q}\xi_{p}(Z)\rangle_{{\cal F}^{d}}=\sum_{i=1}^{d}\langle f_{i},\ev_{q}\xi_{p,i}(Z)\rangle_{{\cal F}},
\]
where $\xi_{p,i}(x,\cdot)$ is the $i$-th component of $\xi_{p}(x,\cdot)$.
For $X$ from the target measure, we have $\ev_{p}(T_{p}f)(X)=0$,
which can be seen using integration by parts, c.f. Lemma \ref{lem:easy}
in the supplement. We can now define a Stein discrepancy and express
it in the RKHS,
\begin{align*}
S_{p}(Z) & :=\sup_{\Vert f\Vert<1}\ev_{q}(T_{p}f)(Z)-\ev_{p}(T_{p}f)(X)\\
 & =\sup_{\Vert f\Vert<1}\ev_{q}(T_{p}f)(Z)\\
 & =\sup_{\Vert f\Vert<1}\langle f,\ev_{q}\xi_{p}(Z)\rangle_{{\cal F}^{d}}\\
 & =\|\ev_{q}\xi_{p}(Z)\|_{{\cal F}^{d}},
\end{align*}
This makes it clear why $\ev_{p}(T_{p}f)(X)=0$ is a desirable property:
we can compute $S_{p}(Z)$ by computing $\|\ev_{q}\xi_{p}(Z)\|$,
without the need to access $X$ in the form of samples from $p$.
To state our first result we define 
\begin{align*}
h_{p}(x,y) & :=\nabla\log p(x)^{\top}\nabla\log p(y)k(x,y)\\
 & \quad+\nabla\log p(y)^{\top}\nabla_{x}k(x,y)\\
 & \quad+\nabla\log p(x){}^{\top}\nabla_{y}k(x,y)\\
 & \quad+\langle\nabla_{x}k(x,\cdot),\nabla_{y}k(\cdot,y)\rangle_{{\cal F}^{d}},
\end{align*}

where the last term can be written as a sum $\sum_{i=1}^{d}\frac{\partial k(x,y)}{\partial x_{i}\partial y_{i}}$.
\foreignlanguage{british}{The following theorem gives a simple closed
form expression for $\|\ev_{q}\xi_{p}(Z)\|_{{\cal F}^{d}}$ in terms
of $h_{p}$.}
\begin{thm}
\label{th:closed_form_discrepancy} If $Eh_{p}(Z,Z)<\infty$, then
$S_{p}^{2}(Z)=\|\ev_{q}\xi_{p}(Z)\|_{{\cal F}^{d}}^{2}=\ev_{q}h_{p}(Z,Z')$,
 where $Z'$ is independent of $Z$ with an identical distribution.\foreignlanguage{british}{ }
\end{thm}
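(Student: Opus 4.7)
The first equality $S_p^2(Z)=\|\mathbb{E}_q\xi_p(Z)\|_{\mathcal{F}^d}^2$ has already been obtained in the derivation preceding the theorem statement (the supremum of a continuous linear functional over the unit ball of a Hilbert space equals the norm of its Riesz representer). So the entire content is the second equality, $\|\mathbb{E}_q\xi_p(Z)\|_{\mathcal{F}^d}^2=\mathbb{E}_q h_p(Z,Z')$.

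My plan is to expand the squared norm as a double expectation over an independent copy and then compute the resulting RKHS inner product in closed form using the reproducing property. Concretely, I would first argue that $\xi_p(Z)$ is Bochner-integrable in $\mathcal{F}^d$: by the reproducing property for kernel derivatives, $\|\xi_p(x)\|_{\mathcal{F}^d}^2=h_p(x,x)$, so the hypothesis $\mathbb{E}\,h_p(Z,Z)<\infty$ gives $\mathbb{E}\|\xi_p(Z)\|_{\mathcal{F}^d}<\infty$ by Jensen. Bochner integrability then lets me pull the inner product outside the expectation and write
\[
\|\mathbb{E}_q\xi_p(Z)\|_{\mathcal{F}^d}^2=\bigl\langle \mathbb{E}_q\xi_p(Z),\mathbb{E}_q\xi_p(Z')\bigr\rangle_{\mathcal{F}^d}=\mathbb{E}_{Z,Z'}\bigl\langle \xi_p(Z),\xi_p(Z')\bigr\rangle_{\mathcal{F}^d},
\]
with $Z,Z'$ i.i.d.\ from $q$ (a Fubini/Bochner exchange justified by the integrability above).

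The remaining step is purely algebraic: to show that $\langle \xi_p(x),\xi_p(y)\rangle_{\mathcal{F}^d}=h_p(x,y)$. Expanding the definition $\xi_{p,i}(x,\cdot)=\partial_{x_i}\log p(x)\,k(x,\cdot)+\partial_{x_i}k(x,\cdot)$, summing over coordinates $i=1,\dots,d$, and distributing the inner product gives four groups of terms. Each is collapsed using the reproducing property and its derivative version: $\langle k(x,\cdot),k(y,\cdot)\rangle=k(x,y)$, $\langle k(x,\cdot),\partial_{y_i}k(y,\cdot)\rangle=\partial_{y_i}k(x,y)$, $\langle \partial_{x_i}k(x,\cdot),k(y,\cdot)\rangle=\partial_{x_i}k(x,y)$, and $\langle \partial_{x_i}k(x,\cdot),\partial_{y_i}k(y,\cdot)\rangle=\partial_{x_i}\partial_{y_i}k(x,y)$. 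Matching these four groups against the four lines in the definition of $h_p$ completes the identification.

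The only real obstacle is the first step: justifying that the kernel has enough regularity for the derivative reproducing property to hold and that $\xi_p(Z)$ is a genuine $\mathcal{F}^d$-valued Bochner integral. The derivative reproducing property requires $k$ to be $C^{1,1}$ on the diagonal (or the standard mixed-partial differentiability assumption used implicitly in the construction of $\xi_p$), which should be stated as part of the standing assumptions or absorbed into the finiteness of the trace $\mathbb{E}\,h_p(Z,Z)$. Once that is in place, the computation is mechanical and yields the stated closed form.
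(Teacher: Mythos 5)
Your proposal is correct and follows essentially the same route as the paper's proof: establish Bochner integrability of $\xi_p(Z)$ from $\ev_q h_p(Z,Z)=\ev_q\|\xi_p(Z)\|^2_{\mathcal{F}^d}<\infty$, identify $\langle\xi_p(x,\cdot),\xi_p(y,\cdot)\rangle_{\mathcal{F}^d}=h_p(x,y)$ via the (derivative) reproducing property, and expand the squared norm of the mean embedding as a double expectation over an independent copy. The only cosmetic difference is that you take the identity $S_p(Z)=\|\ev_q\xi_p(Z)\|_{\mathcal{F}^d}$ as given from the preceding discussion, whereas the paper re-derives it inside the proof (including the interchange of $\langle f_i,\cdot\rangle_{\mathcal{F}}$ with the Bochner integral); your remark on the kernel regularity needed for $\nabla k(x,\cdot)\in\mathcal{F}$ corresponds to the paper's citation of Lemma 4.34 of Steinwart and Christmann.
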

\selectlanguage{british}%

The second main result states that the discrepancy $S_{p}(Z)$ can
be used to distinguish two distributions.

\selectlanguage{english}%
\begin{thm}
\label{theorem_discrepancy_is_metric} Let $q,p$ be probability measures
and $Z\sim q$. If the kernel $k$ is $C_{0}$-universal \citep[Definition 4.1]{carmeli2010vector},
$\ev_{q}h_{q}(Z,Z)<\infty$, and $\ev_{q}\left\Vert \nabla\left(\log\frac{p(Z)}{q(Z)}\right)\right\Vert ^{2}<\infty$,
then $S_{p}(Z)=0$ if and only if $p=q$.
\end{thm}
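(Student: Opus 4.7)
My plan is to split the claim into the two directions and concentrate on the non-trivial ``only if''.

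For the easy direction ($p=q\Rightarrow S_p(Z)=0$), I would substitute $q=p$ into the derivation preceding Theorem~\ref{th:closed_form_discrepancy}: the integration-by-parts identity (Lemma~\ref{lem:easy} in the supplement) yields $\ev_p(T_p f)(X)=0$ for every $f\in\mathcal{F}^d$, so the supremum defining $S_p$ collapses to zero.

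The interesting direction rests on a single algebraic observation: because the $\nabla k$ parts of $\xi_p$ and $\xi_q$ coincide,
$$\xi_p(x,\cdot)-\xi_q(x,\cdot)=g(x)\,k(x,\cdot),\qquad g(x):=\nabla\log\frac{p(x)}{q(x)}.$$
I would first apply the Stein identity to the reference measure $q$ itself to get $\ev_q\xi_q(Z)=0$ as an element of $\mathcal{F}^d$; Bochner integrability here is exactly the assumption $\ev_q h_q(Z,Z)<\infty$, since that expectation equals $\ev_q\|\xi_q(Z,\cdot)\|_{\mathcal{F}^d}^2$ by the computation behind Theorem~\ref{th:closed_form_discrepancy}. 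Subtracting then gives the representation
$$\ev_q\xi_p(Z)=\int g(x)\,k(x,\cdot)\,q(x)\,dx\;\in\;\mathcal{F}^d,$$
and the hypothesis $S_p(Z)=\|\ev_q\xi_p(Z)\|_{\mathcal{F}^d}=0$ forces this integral to vanish coordinate by coordinate. In other words, for each $i=1,\dots,d$ the signed measure $d\nu_i(x):=g_i(x)q(x)\,dx$ has zero kernel mean embedding in $\mathcal{F}$.

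The next step is to verify that each $\nu_i$ is finite: Cauchy--Schwarz together with the hypothesis $\ev_q\|\nabla\log(p/q)\|^2<\infty$ gives $\int|g_i(x)|q(x)\,dx<\infty$. The $C_0$-universality of $k$ then makes the scalar mean embedding map $\nu\mapsto\int k(x,\cdot)\,d\nu(x)$ injective on finite signed measures (via density of $\mathcal{F}$ in $C_0$), so $\nu_i=0$ and, because $q$ is strictly positive on its support, $g_i\equiv 0$ there. Hence $\nabla\log(p/q)\equiv 0$ on the common open support, so $p/q$ is constant and, since $p,q$ are probability densities, that constant must equal one, giving $p=q$.

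The main obstacle is the injectivity step: $C_0$-universality is designed exactly for this kind of argument, but it only applies to \emph{finite} signed measures, which is precisely why the moment hypothesis $\ev_q\|\nabla\log(p/q)\|^2<\infty$ is imposed. A secondary care is in justifying the Bochner integrals, so that the chain $\sup_{\|f\|\le 1}\ev_q(T_pf)(Z)=\|\ev_q\xi_p(Z)\|_{\mathcal{F}^d}$ used implicitly on the other side of the equality is rigorous; the assumption $\ev_q h_q(Z,Z)<\infty$ combined with the boundedness of $C_0$-universal kernels handles this.
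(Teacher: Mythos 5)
Your proposal is correct and follows essentially the same route as the paper: the same decomposition $\xi_p-\xi_q=\nabla\log(p/q)\,k(x,\cdot)$, the same use of $\ev_q\xi_q(Z)=0$ (justified by the finiteness of $\ev_q h_q(Z,Z)$), and the same appeal to $C_0$-universality to force $\nabla\log(p/q)\equiv 0$. The only cosmetic difference is that you phrase the injectivity step via finite signed measures $g_i q\,dx$ (checking finiteness by Cauchy--Schwarz), whereas the paper invokes the $L^2(q)$-embedding version of $C_0$-universality \citep[Theorem 4.2b]{carmeli2010vector} directly on the square-integrable function $g$; both are valid readings of the same hypothesis.
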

\selectlanguage{british}%
Section \ref{sec:details_kernel_stein} contains all necessary proofs.
We now proceed to construct an estimator for $S(Z)^{2}$, and outline
its asymptotic properties.

\subsection{Wild Bootstrap Testing}

It is straightforward to estimate the squared Stein discrepancy $S(Z)^{2}$
from samples $\{Z_{i}\}_{i=1}^{n}$: a quadratic time estimator is
a V-Statistic, and takes the form
\[
V_{n}=\frac{1}{n^{2}}\sum_{i,j=1}^{n}h_{p}(Z_{i},Z_{j}).
\]
 The asymptotic null distribution of the normalised V-Statistic $nV_{n}$,
however, has no computable closed form. Furthermore, care has to be
taken when the $Z_{i}$ exhibit correlation structure, as the null
distribution might significantly change, impacting test significance.
The wild bootstrap technique \citep{Shao2010,leucht_dependent_2013,FroLauLerRey12}
addresses both problems. First, it allows us to estimate quantiles
of the null distribution in order to compute test thresholds. Second,
it accounts for correlation structure in the $Z_{i}$ by mimicking
it with an \foreignlanguage{english}{auxiliary} random process: a\foreignlanguage{english}{
simple Markov chain taking values in $\{-1,1\}$, starting from $W_{1,n}=1$,}
\[
W_{t,n}=\mathbf{1}(U_{t}>a_{n})W_{t-1,n}-\mathbf{1}(U_{t}<a_{n})W_{t-1,n},
\]
\foreignlanguage{english}{where the $U_{t}$ are uniform $(0,1)$
i.i.d. random variables and $a_{n}$ is the probability of $W_{t,n}$
changing sign (for i.i.d.~data we set $a_{n}=0.5$). This leads to
a bootstrapped V-statistic }

\selectlanguage{english}%
\emph{
\[
B_{n}=\frac{1}{n^{2}}\sum_{i,j=1}^{n}W_{i,n}W_{j,n}h_{p}(Z_{i,}Z_{j}).
\]
}Proposition \ref{thm:wild_bootstrap_works}\foreignlanguage{british}{
establishes that, under the null hypothesis, $nB_{n}$} is a good
approximation of $nV_{n}$, so it is possible to approximate quantiles
of the null distribution by sampling from it. Under the alternative,
$V_{n}$ dominates $B_{n}$ -- resulting in almost sure rejection
of the null hypothesis.

We propose the following test\foreignlanguage{british}{ procedure
for testing the null hypothesis that the $Z_{i}$ are distributed
according to the target distribution $p$.}
\selectlanguage{british}%
\begin{enumerate}
\item Calculate \foreignlanguage{english}{the test statistic $V_{n}$.}
\selectlanguage{english}%
\item Obtain wild bootstrap samples\foreignlanguage{british}{ }$\{B_{n}\}_{i=1}^{D}$
and estimate the $1-\alpha$ empirical quantile of these samples. 
\item If $V_{n}$ exceeds the quantile, reject.
\end{enumerate}
\selectlanguage{english}%

\section{Proofs of the Main Results\label{sec:Details}}

We now prove the claims made in the previous section.

\subsection{Stein Operator in RKHS}

\label{sec:details_kernel_stein}

Lemma \ref{lem:easy} (in the Appendix) shows that the expected value
of the Stein operator is zero on the target measure.
\begin{proof}[Proof of Theorem \ref{th:closed_form_discrepancy}]

$\xi_{p}(x,\cdot)$ is an element of the reproducing kernel Hilbert
space $\mathcal{F}^{d}$ -- by \citet[Lemma 4.34]{SteChr08} $\nabla k(x,\cdot)\in\mathcal{F}$,
and $\frac{\partial\log p(x)}{\partial x_{i}}$ is just a scalar.
\foreignlanguage{british}{We first show that $h_{p}(x,y)=\langle\xi_{p}(x,\cdot),\xi_{p}(y,\cdot)\rangle.$
Using notations
\begin{align*}
\nabla_{x}k(x,\cdot) & =\left(\frac{\partial k(x,\cdot)}{\partial x_{1}},\cdots,\frac{\partial k(x,\cdot)}{\partial x_{d}}\right)\\
\nabla_{y}k(\cdot,y) & =\left(\frac{\partial k(\cdot,y)}{\partial y_{1}},\cdots,\frac{\partial k(\cdot,y)}{\partial y_{d}}\right),
\end{align*}
we calculate 
\begin{align*}
\langle\xi_{p}(x,\cdot),\xi_{p}(y,\cdot)\rangle & =\nabla\log p(x)^{\top}\nabla\log p(y)k(x,y)\\
 & \quad+\nabla\log p(y)^{\top}\nabla_{x}k(x,y)\\
 & \quad+\nabla\log p(x){}^{\top}\nabla_{y}k(x,y)\\
 & \quad+\langle\nabla_{x}k(x,\cdot),\nabla_{y}k(\cdot,y)\rangle_{\mathcal{F}^{d}}.
\end{align*}
}Next we show that $\xi_{p}(x,\cdot)$ is Bochner integrable \citep[see][Definition A.5.20]{SteChr08},
\[
\ev_{q}\|\xi_{p}(Z)\|_{\mathcal{F}^{d}}\leq\sqrt{\ev_{q}\|\xi_{p}(Z)\|_{\mathcal{F}^{d}}^{2}}=\sqrt{\ev_{q}h_{p}(Z,Z)}<\infty.
\]
\foreignlanguage{british}{This allows us to take the expectation inside
the RKHS inner product. We }next relate the expected value of the
Stein operator to the inner product of $f$ and the expected value
of $\xi_{q}(Z)$, 
\begin{align}
\ev_{q}T_{p}f(Z)=\langle f,\ev_{q}\xi_{p}(Z)\rangle_{\mathcal{F}^{d}} & =\sum_{i=1}^{d}\langle f_{i},\ev_{q}\xi_{p,i}(Z)\rangle_{\mathcal{F}}.\label{eq:expectedVauleT}
\end{align}
We check the claim for all dimensions, 
\begin{align*}
 & \left\langle f_{i},\ev_{q}\xi_{p,i}(Z)\right\rangle _{\mathcal{F}}\\
 & =\left\langle f_{i},\ev_{q}\left[\frac{\partial\log p(Z)}{\partial x_{i}}k(Z,\cdot)+\frac{\partial k(Z,\cdot)}{\partial x_{i}}\right]\right\rangle _{\mathcal{F}}\\
 & =\ev_{q}\left\langle f_{i},\frac{\partial\log p(Z)}{\partial x_{i}}k(Z,\cdot)+\frac{\partial k(Z,\cdot)}{\partial x_{i}}\right\rangle _{\mathcal{F}}\\
 & =\ev_{q}\left[\frac{\partial\log p(Z)}{\partial x_{i}}f_{i}(Z)+\frac{\partial f_{i}(Z,\cdot)}{\partial x_{i}}\right].
\end{align*}
The second equality follows from the fact that a linear operator $\langle f_{i},\cdot\rangle_{\mathcal{F}}$
can be interchanged with the Bochner integral, and the fact that $\xi_{p}$
is Bochner integrable. Using definition of $S(Z)$, Lemma (\ref{lem:easy}),
and Equation (\ref{eq:expectedVauleT}), we have 
\begin{align*}
S_{p}(Z) & :=\sup_{\Vert f\Vert<1}\ev_{q}(T_{p}f)(Z)-\ev_{p}(T_{p}f)(X)\\
 & =\sup_{\Vert f\Vert<1}\ev_{q}(T_{p}f)(Z)\\
 & =\sup_{\Vert f\Vert<1}\langle f,\ev_{q}\xi_{p}(Z)\rangle_{{\cal F}^{d}}\\
 & =\|\ev_{q}\xi_{p}(Z)\|_{\mathcal{F}^{d}}.
\end{align*}
We \foreignlanguage{british}{now calculate closed form expression
for }$S_{p}^{2}(Z)$,

\begin{align*}
S_{p}^{2}(Z) & =\langle\ev_{q}\xi_{p}(Z),\ev_{q}\xi_{p}(Z)\rangle_{\mathcal{F}^{d}}=\ev_{q}\langle\xi_{p}(Z),\ev_{q}\xi_{p}(Z)\rangle_{\mathcal{F}^{d}}\\
 & =\ev_{q}\langle\xi_{p}(Z),\xi_{p}(Z')\rangle_{\mathcal{F}^{d}}=\ev_{q}h_{p}(Z,Z'),
\end{align*}
\foreignlanguage{british}{where $Z'$ is an independent copy of $Z$.}
\end{proof}
\selectlanguage{british}%
Next, we prove that \foreignlanguage{english}{the discrepancy $S$
discriminates different probability measures. }
\selectlanguage{english}%
\begin{proof}[Proof of Theorem \ref{theorem_discrepancy_is_metric}]
 If $p=q$ then $S_{p}(Z)$ is $0$ by Lemma (\ref{lem:easy}). Suppose
$p\neq q$, but $S_{p}(Z)=0$. If $S_{p}(Z)=0$ then, by Theorem \ref{th:closed_form_discrepancy},
$\ev_{q}\xi_{p}(Z)=0.$ In the following we substitute $\log p(Z)=\log q(Z)+[\log p(Z)-\log q(Z)]$,
\begin{align*}
 & \ev_{q}\xi_{p}(Z)\\
 & =\ev_{q}\left(\nabla\log p(Z)k(Z,\cdot)+\nabla k(Z,\cdot)\right)\\
 & =\ev_{q}\xi_{q}(Z)+\ev_{q}\left(\nabla[\log p(Z)-\log q(Z)]k(Z,\cdot)\right)\\
 & =\ev_{q}\left(\nabla[\log p(Z)-\log q(Z)]k(Z,\cdot)\right)
\end{align*}
We have used Theorem \ref{th:closed_form_discrepancy} and Lemma (\ref{lem:easy})
to see that $\ev_{q}\xi_{q}(Z)=0$, since $\|\ev_{q}\xi_{q}(Z)\|^{2}=S_{q}^{2}(Z)=0$.

We \foreignlanguage{british}{recognise} that the expected value of
$\nabla(\log p(Z)-\log q(Z))k(Z,\cdot)$ is the mean embedding of
a function $g(y)=\nabla\left(\log\frac{p(y)}{q(y)}\right)$ with respect
to the measure $q$. By the assumptions the function $g$ is square
integrable; therefore, since the kernel $k$ is $C_{o}$-universal,
by \citet[ Theorem 4.2 b]{carmeli2010vector} its embedding is zero
if and only if $g=0$. This implies that 
\[
\nabla\log\frac{p(y)}{q(y)}=(0,\cdots,0).
\]
A constant vector field of derivatives can only be generated by a
constant function, so $\log\frac{p(y)}{q(y)}=C$, for some $C$, which
implies that $p(y)=e^{C}q(y)$. Since $p$ and $q$ both integrate
to one, $C=0$  and thus $p=q$, which is a contradiction.
\end{proof}
\selectlanguage{british}%

\subsection{Wild Bootstrap Testing}

\selectlanguage{english}%
\label{sub:details_testing}

The two concepts required to derive the distribution of the test statistic
are: $\tau$-mixing \citep{dedecker2007weak,leucht_dependent_2013},
and V-statistics \citep{serfling80}. 

We assume $\tau$-mixing as our notion of dependence within the observations,
since this is weak enough for most practical applications. Trivially,
i.i.d.~observations are $\tau$-mixing. As for Markov chains, whose
convergence we study in the experiments, the property of geometric
ergodicity implies $\tau$-mixing (given that the stationary distribution
has a finite moment of some order -- see the Appendix for further
discussion). For further details on $\tau$-mixing, see \citep{dedecker2005new,dedecker2007weak}.
For this work, we assume a technical condition $\sum_{t=1}^{\infty}t^{2}\sqrt{\tau(t)}\leq\infty$.
 A direct application of \citep[Theorem 2.1]{leucht2012degenerate}
characterises the limiting behavior of $nV_{n}$ for $\tau$-mixing
processes. 
\begin{prop}
\label{thm: null_dist}\textup{If $h$ is Lipschitz continuous and
$\ev_{q}h_{p}(Z,Z)<\infty$ then, under the null hypothesis, $nV_{n}$
converges weakly to some distribution.}
\end{prop}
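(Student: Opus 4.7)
The plan is to invoke \citet[Theorem 2.1]{leucht2012degenerate}, which gives weak convergence of $nV_n$ for $\tau$-mixing sequences whenever the V-statistic kernel is first-order degenerate, Lipschitz, and satisfies a diagonal moment bound, while the mixing coefficients decay fast enough. The proposition already assumes exactly these regularity conditions ($h_p$ Lipschitz, $\ev_q h_p(Z,Z)<\infty$) together with the global mixing assumption $\sum_{t\geq 1} t^{2}\sqrt{\tau(t)}<\infty$, so essentially all that has to be done is to verify degeneracy under the null.

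My first step would be to write $h_p(x,y)=\langle \xi_p(x,\cdot),\xi_p(y,\cdot)\rangle_{\mathcal{F}^d}$ as established in the proof of Theorem~\ref{th:closed_form_discrepancy}, and to note that under $H_0$ we have $q=p$, so Lemma~\ref{lem:easy} (combined with the Bochner-integrability argument used in that same proof) yields $\ev_p \xi_p(Z)=0$ in $\mathcal{F}^d$. Consequently, for every fixed $y$,
\[
\ev_p h_p(Z,y)=\langle \ev_p\xi_p(Z),\xi_p(y,\cdot)\rangle_{\mathcal{F}^d}=0,
\]
which is precisely first-order degeneracy of the V-statistic kernel. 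Symmetry of $h_p$ in its two arguments is immediate from the inner-product representation.

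Next I would check that the remaining hypotheses of \citet[Theorem 2.1]{leucht2012degenerate} are in force: the kernel $h_p$ is Lipschitz by assumption, the second moment on the diagonal $\ev_q h_p(Z,Z)<\infty$ is assumed, and the $\tau$-mixing rate condition $\sum_t t^2\sqrt{\tau(t)}<\infty$ is the standing assumption stated just before the proposition (which covers i.i.d.\ data trivially and geometrically ergodic Markov chains by the remark in the Appendix). With these in place, Leucht's theorem produces a limiting distribution of the form $\sum_{k\geq 1}\lambda_k(\zeta_k^2-1)+$ correlation terms, where $\{\lambda_k\}$ are the eigenvalues of the integral operator associated to $h_p$ under $p$. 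Since the proposition only asserts weak convergence to \emph{some} distribution, we do not need to identify this limit explicitly.

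The only place that requires genuine care is the degeneracy step: it depends on taking the expectation inside the RKHS inner product, which in turn relies on the Bochner integrability of $\xi_p$ under $q=p$. This is guaranteed by the bound $\ev_p\|\xi_p(Z)\|_{\mathcal{F}^d}\leq \sqrt{\ev_p h_p(Z,Z)}<\infty$ already used in the proof of Theorem~\ref{th:closed_form_discrepancy}, so no new analytic work is needed. Everything else is a direct citation of Leucht's theorem, and I would expect the write-up to be essentially a few lines verifying the hypotheses and pointing to the relevant earlier results.
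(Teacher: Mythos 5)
Your proposal is correct and follows essentially the same route as the paper: both proofs reduce to verifying the hypotheses of \citet[Theorem 2.1]{leucht2012degenerate}, using the inner-product representation $h_{p}(x,y)=\langle\xi_{p}(x,\cdot),\xi_{p}(y,\cdot)\rangle_{\mathcal{F}^{d}}$ to obtain positive definiteness, symmetry, and degeneracy under the null (the latter from $\ev_{p}\xi_{p}(Z)=0$), with the Lipschitz, moment, and $\tau$-mixing conditions holding by assumption. Your explicit remark that the degeneracy step rests on Bochner integrability of $\xi_{p}$ is a slightly more careful account of a point the paper leaves implicit, but the argument is the same.
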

The proof, which is a simple verification of the relevant assumptions,
can be found in the Appendix. Although a formula for a limit distribution
of \textbf{$V_{n}$} can be derived explicitly \citep[Theorem 2.1][]{leucht2012degenerate},
we do not provide it here. To our knowledge there are no methods of
obtaining quantiles of a limit of \textbf{$V_{n}$} in closed form.
The common solution is to estimate quantiles by a resampling method,
as described in Section \ref{sec:A-Kernel-Goodness-of-fit}. The validity
of this resampling method is guaranteed by the following proposition
(which follows from Theorem 2.1 of \citeauthor{leucht2012degenerate}
 and a modification of the Lemma 5 of \citet{chwialkowski2014wild}),
proved in the supplement.
\begin{prop}
\textup{\label{thm:wild_bootstrap_works}Let $f(Z_{1,n},\cdots,Z_{t,n})=\sup_{x}|P(nB_{n}>x|Z_{1,n},\cdots,Z_{t,n})-P(nV_{n}>x)|$
be a difference between quantiles. If $h$ is Lipschitz continuous
and $\ev_{q}h_{p}(Z,Z)^{2}<\infty$ then, under the null hypothesis,
$f(X_{1,n},\cdots,X_{t,n})$ converges to zero in probability; under
the alternative hypothesis, $B_{n}$ converges to zero, while $V_{n}$
converges to a positive constant.}
\end{prop}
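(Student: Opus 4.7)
The plan is to split the argument according to the two regimes. Under the null hypothesis $p = q$, the key observation is that $\ev_q \xi_p(Z) = 0$, which follows from $S_p(Z) = 0$ and Theorem \ref{th:closed_form_discrepancy}. By the reproducing property this forces $\ev_q h_p(Z, z') = \langle \ev_q \xi_p(Z), \xi_p(z') \rangle_{\mathcal{F}^d} = 0$ for every fixed $z'$, so $h_p$ is a \emph{degenerate} kernel under the null. This degeneracy is exactly what places us in the setting of Theorem 2.1 of \citet{leucht2012degenerate} and Lemma 5 of \citet{chwialkowski2014wild}, the two ingredients the proposition identifies.

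For the null, I would first check the hypotheses of those results: Lipschitz continuity of $h_p$ and the moment bound $\ev_q h_p(Z,Z)^2 < \infty$ are imposed in the statement, and the $\tau$-mixing summability $\sum_t t^2 \sqrt{\tau(t)} < \infty$ is a standing assumption. Theorem 2.1 of \citet{leucht2012degenerate} then yields weak convergence of $nV_n$ to a limit $\zeta$ with a spectral representation of the form $\sum_k \lambda_k (\chi^2_{1,k} - 1)$, whose CDF is continuous. The cited modification of Lemma 5 of \citet{chwialkowski2014wild} shows that the conditional law of $nB_n$ given the data converges to the same $\zeta$ in probability. Combining these via Polya's theorem upgrades the pointwise CDF convergence to the uniform statement $f(Z_{1,n},\dots,Z_{n,n}) \to 0$ in probability.

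Under the alternative $p \neq q$, two separate convergences are required. First, a law of large numbers for V-statistics of stationary $\tau$-mixing sequences (applied to the integrable kernel $h_p$) yields $V_n \to \ev_q h_p(Z, Z') = S_p^2(Z)$ in probability, and this limit is strictly positive by Theorem \ref{theorem_discrepancy_is_metric}. Second, decompose $B_n = n^{-2} \sum_i h_p(Z_i, Z_i) + 2 n^{-2} \sum_{i<j} W_{i,n} W_{j,n} h_p(Z_i, Z_j)$. The diagonal term is $O_p(1/n)$ by the same LLN, and the off-diagonal term has conditional mean zero given the data, since $\{W_{i,n}\}$ is a stationary mean-zero Markov chain independent of $\{Z_i\}$. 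A second-moment computation bounding $\mathrm{Var}(B_n \mid Z_1,\dots,Z_n)$ via the covariance structure of the bootstrap chain then gives $B_n \to 0$ in probability.

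The main obstacle is the null case: one must track two interacting sources of randomness -- the $\tau$-mixing chain $\{Z_{i,n}\}$ and the bootstrap weights $\{W_{i,n}\}$ -- and show that their joint quadratic form reproduces the limit law $\zeta$. This is where the strengthened moment condition $\ev_q h_p(Z,Z)^2 < \infty$ (rather than the first moment used in Proposition \ref{thm: null_dist}) becomes essential: it provides the square-integrability needed to truncate the spectral expansion of $h_p$ at a finite level and uniformly control tail contributions to both $nV_n$ and its bootstrap counterpart $nB_n$.
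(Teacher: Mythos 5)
Your treatment of the null hypothesis follows essentially the same route as the paper: both reduce the claim to Theorem 2.1 of Leucht and Neumann (2012), using the degeneracy $\ev_{q}\xi_{p}(Z)=0$ together with the Lipschitz and moment assumptions, and both conclude uniform closeness of the conditional distribution of $nB_{n}$ to that of $nV_{n}$. The paper additionally spells out the verification of the bootstrap assumption B2 --- row-wise stationarity of $\{W_{t,n}\}$, independence from the data, the autocovariance $\ev W_{s,n}W_{t,n}=(1-2a_{n})^{|s-t|}$, and the growth conditions on $l_{n}$ under the choice $a_{n}=w_{n}^{-1}$ with $w_{n}=o(n)$ and $w_{n}\to\infty$ --- which your write-up takes for granted; that verification is where the tuning of $a_{n}$ actually enters the argument and should not be omitted.

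Under the alternative you depart from the paper, which simply invokes Theorems 2 and 3 of Chwialkowski et al.\ (2014) after checking $\sup_{i,j}\ev_{q}h_{p}(Z_{i},Z_{j})^{2}\leq\left(\ev_{q}h_{p}(Z,Z)\right)^{2}<\infty$ via Cauchy--Schwarz in the RKHS. Your hands-on argument for $B_{n}\to0$ contains a concrete error: the off-diagonal term does \emph{not} have conditional mean zero. The wild bootstrap weights are deliberately autocorrelated, $\ev\left[W_{i,n}W_{j,n}\right]=(1-2a_{n})^{|i-j|}\neq0$, so
\[
\ev\left[B_{n}\mid Z_{1},\dots,Z_{n}\right]=\frac{1}{n^{2}}\sum_{i,j=1}^{n}(1-2a_{n})^{|i-j|}h_{p}(Z_{i},Z_{j}),
\]
whose off-diagonal part is of order $1/(na_{n})=w_{n}/n$ times a bounded quantity. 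This does tend to zero, but only because of the rate condition $w_{n}=o(n)$, not because the $W$'s have mean zero; a direct proof must use that condition explicitly and must also control the conditional variance using the mixing of both the $W$'s and the $Z$'s, which is precisely what the cited Theorem 2 packages up. Finally, your suggestion that $\ev_{q}h_{p}(Z,Z)^{2}<\infty$ is needed to ``truncate the spectral expansion'' is not how the condition functions here: it enters as the moment hypothesis $\sup_{n}\sup_{i,j<n}\ev_{q}h_{p}(Z_{i},Z_{j})^{2}<\infty$ of the theorems governing the behaviour under the alternative.
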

\selectlanguage{british}%
As a consequence, if the null hypothesis is true, we can approximate
any quantile; while under the alternative hypothesis, all quantiles
of $B_{n}$ collapse to zero while $P(V_{n}>0)\to1$. We discuss specific
case of testing MCMC convergence in the Appendix.

\selectlanguage{english}%

\section{Experiments}

\label{sec:experiment}

We provide a number of experimental applications for our test. We
begin with a simple check to establish correct test calibration on
non-i.i.d.~data, followed by a demonstration of statistical model
criticism for Gaussian process (GP) regression. We then apply the
proposed test to quantify bias-variance trade-offs in MCMC, and demonstrate
how to use the test to verify whether MCMC samples are drawn from
the desired stationary distribution. In the final experiment, we move
away from the MCMC setting, and\foreignlanguage{british}{ use the
test to evaluate the convergence of a nonparametric density estimator.
Code can be found at \href{https://github.com/karlnapf/kernel_goodness_of_fit}{https://github.com/karlnapf/kernel\_{}goodness\_{}of\_{}fit}.}

\subsubsection*{Student's t vs.~Normal}

\selectlanguage{british}%
\begin{figure}
\selectlanguage{english}%
\begin{centering}
\includegraphics{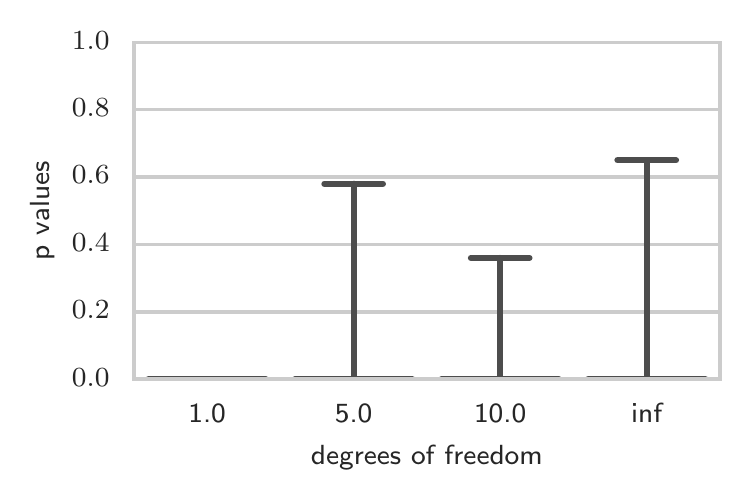}
\par\end{centering}

\selectlanguage{british}%
\caption{\selectlanguage{english}%
Large autocovariance, unsuitable bootstrap. The parameter $a_{n}$
is too large and the bootstrapped V-statistics $B_{n}$ are too low
on average. Therefore, it is very likely that $V_{n}>B_{n}$ and the
test is too conservative. \foreignlanguage{british}{\label{fig:student_bad}}\selectlanguage{british}%
}
\end{figure}

\selectlanguage{english}%
In our first task, we modify Experiment 4.1 from \citet{gorham2015measuring}.
The null hypothesis is that the observed samples come from a standard
normal distribution. We study the power of the test against samples
from a Student's t distribution. We expect to observe low p-values
when testing against a Student's t distribution with few degrees of
freedom. We consider 1, 5, 10 or $\infty$ degrees of freedom, where
$\infty$ is equivalent to sampling from a standard normal distribution.
For a fixed number of degrees of freedom we draw 1400 samples and
calculate the p-value. This procedure is repeated 100 times, and the
bar plots of p-values are shown in Figures \ref{fig:student_bad},\ref{fig:studentst},\ref{fig:thinning}. 

\begin{figure}
\centering{}\includegraphics{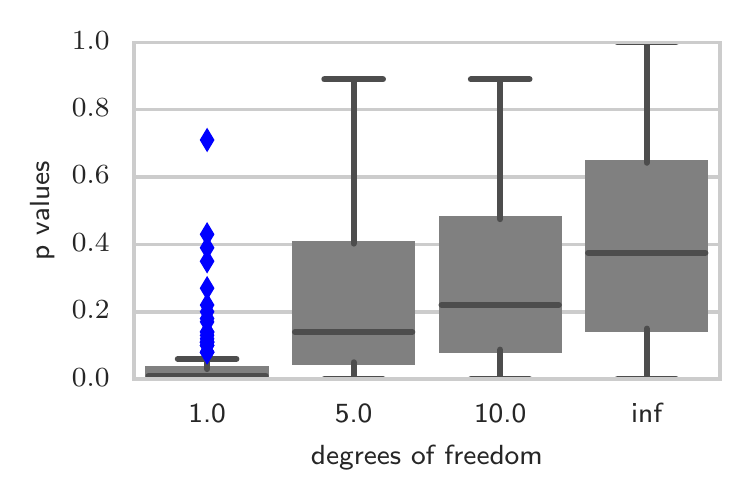} \caption{Large autocovariance, suitable bootstrap. The parameter $a_{n}$is
chosen suitably, but due to a large autocorrelation within the samples,
the power of the test is small (effective sample size is small). \label{fig:studentst}}
\end{figure}

\selectlanguage{british}%
Our twist on \foreignlanguage{english}{the original experiment 4.1
by \citeauthor{gorham2015measuring} is that the draws from the Student's
t distribution exhibit temporal correlation. We generate samples using
a Metropolis\textendash Hastings algorithm, with a Gaussian random
walk with variance 1/2. We emphasise the need for an appropriate choice
of the wild bootstrap process parameter $a_{n}$. In Figure \ref{fig:student_bad}
we plot p-values for $a_{n}$ being set to $0.5$. Such a high value
of $a_{n}$ is suitable for i.i.d.~observations, but results in p-values
that are too conservative for temporally correlated observations.
In Figure \ref{fig:studentst}, we set $a_{n}=0.02$, which gives
a well calibrated distribution of the p-values under the null hypothesis,
however the test power is reduced. Indeed, p-values for five degrees
of freedom are already large. The solution that we recommend is a
mixture of thinning and adjusting $a_{n},$ as presented in the Figure
\ref{fig:thinning}. We thin the observations by a factor of 20 and
set $a_{n}=0.1$, thus preserving both good statistical power and
correct calibration of p-values under the null hypothesis. In a general,
we recommend to thin a chain so that $\text{Cor}(X_{t},X_{t-1})<0.5$,
set $a_{n}=0.1/k$, and run test with at least $\max(500k,d100)$
data points, where $k<10$.}

\begin{figure}
\centering{}\includegraphics{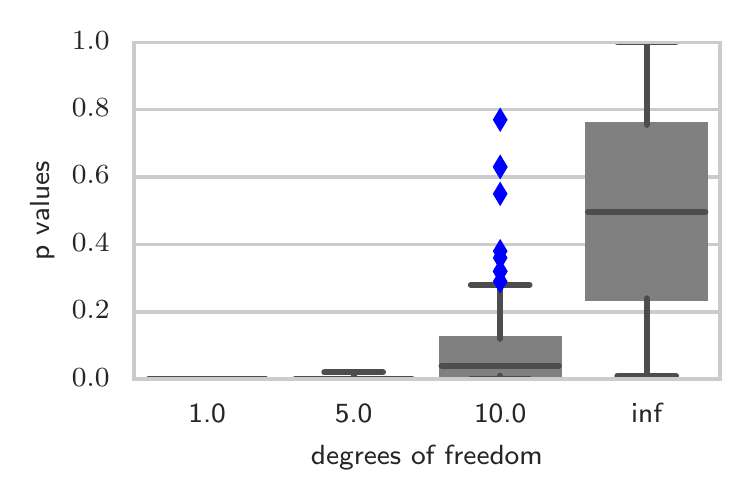}\caption{\selectlanguage{english}%
Thinned sample, suitable bootstrap. Most of the autocorrelation within
the sample is canceled by thinning. To guarantee that the remaining
autocorrelation is handled properly, the wild bootstrap flip probability
is set at $0.1$. \foreignlanguage{british}{\label{fig:thinning}}\selectlanguage{british}%
}
\end{figure}

\subsubsection*{Comparing to a parametric test in increasing dimensions}

In this experiment, we compare with the test proposed by \citet{BaringhausHenze88},
which is essentially an MMD test for normality, i.e.~the null hypothesis
is that $Z$ is a $d$-dimensional standard normal random variable.
We set the sample size to $n=500,1000$ and $a_{n}=0.5$, generate
\[
Z\sim{\cal N}(0,I_{d})\qquad Y\sim U[0,1],
\]
and modify $Z_{0}\leftarrow Z_{0}+Y$. Table \ref{tab:Power-vs-Sample}
shows the power as a function of the sample size. We observe that
for higher dimensions, and where the expectation of the kernel exists
in closed form, an MMD-type test like \citep{BaringhausHenze88} is
a better choice.

\begin{table}
\begin{tabular}{|c|c|c|c|c|c|c|c|}
\hline 
 & \textbf{$d$} & \textbf{2} & \textbf{5} & \textbf{10} & \textbf{15} & \textbf{20} & \textbf{25}\tabularnewline
\hline 
\hline 
\textbf{B\&H} & \multirow{2}{*}{$n=500$} & 1 & 1 & \textbf{1} & \textbf{0.86} & \textbf{0.29} & \textbf{0.24}\tabularnewline
\cline{1-1} \cline{3-8} 
\textbf{Stein} &  & 1 & 1 & 0.86 & 0.39 & 0.05 & 0.05\tabularnewline
\hline 
\hline 
\textbf{B\&H} & \multirow{2}{*}{$n=1000$} & 1 & 1 & 1 & \textbf{1} & \textbf{0.87} & \textbf{0.62}\tabularnewline
\cline{1-1} \cline{3-8} 
\textbf{Stein} &  & 1 & 1 & 1 & 0.77 & 0.25 & 0.05\tabularnewline
\hline 
\end{tabular}

\caption{\label{tab:Power-vs-Sample}Test power vs.~sample size for the test
by \citet{BaringhausHenze88} (B\&H) and our Stein based test.}
\end{table}

\subsubsection*{Statistical Model Criticism on Gaussian Processes}

\begin{figure}
\begin{centering}
\includegraphics{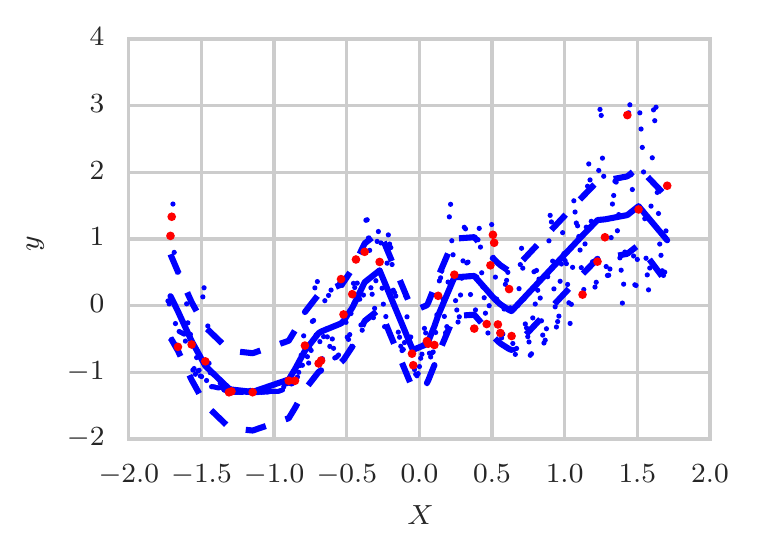}
\par\end{centering}

\caption{Fitted GP and data used to fit (blue) and to apply test (red).\label{fig:experiment_gp_fit}}
\end{figure}

We next apply our test to the problem of statistical model criticism
for GP regression. Our presentation and approach are similar to the
non i.i.d.~case in Section 6 of \citet{lloyd2015statistical}. We
use the \texttt{solar} dataset, consisting of a $d=1$ regression
problem with $N=402$ pairs $(X,y)$. We fit $N_{\text{train}}=361$
data using a GP with an exponentiated quadratic kernel and a Gaussian
noise model, and perform standard maximum likelihood II on the hyperparameters
(length-scale, overall scale, noise-variance). We then apply our test
to the remaining $N_{\text{test}}=41$ data. The test attempts to
falsify the null hypothesis that the \texttt{solar} dataset was generated
from the plug-in predictive distribution (conditioned on training
data and predicted position) of the GP. \citeauthor{lloyd2015statistical}
refer to this setup as non-i.i.d., since the predictive distribution
is a different univariate Gaussian for every predicted point. Our
particular $N_{\text{train}},N_{\text{test}}$ were chosen to make
sure the GP fit has stabilised, i.e.~adding more data did not cause
further model refinement.

Figure \ref{fig:experiment_gp_fit} shows training and testing data,
and the fitted GP. Clearly, the Gaussian noise model is a poor fit
for this particular dataset, e.g.~around $X=-1$. Figure \ref{fig:experiment_gp_test}
shows the distribution over $D=10000$ bootstrapped V-statistics $B_{n}$
with $n=N_{\text{test}}$. The test statistic lies in an upper quantile
of the bootstrapped null distribution, correctly indicating that it
is unlikely the test points were generated by the fitted GP model,
even for the low number of test data observed, $n=41$.

In a second experiment, we compare against \citeauthor{lloyd2015statistical}:
we compute the MMD statistic between test data $(X_{\text{test}},y_{\text{test}})$
and $(X_{\text{test}},y_{\text{rep}})$, where $y_{\text{rep}}$ are
samples from the fitted GP. We draw 10000 samples from the null distribution
by repeatedly sampling new $\tilde{y}_{\text{rep}}$ from the GP plug-in
predictive posterior, and comparing $(X_{\text{test}},\tilde{y}_{\text{rep}})$
to $(X_{\text{test}},y_{\text{rep}})$. When averaged over 100 repetitions
of randomly partitioned $(X,y)$ for training and testing, our goodness
of fit test produces a p-value that is statistically not significantly
different from the MMD method ($p\approx0.1$, note that this result
is subject to $N_{\text{train}},N_{\text{test}}$). We emphasise,
however, that \citeauthor{lloyd2015statistical}'s test requires to
sample from the fitted model (here 10000 null samples were required
in order to achieve stable p-values). Our test \emph{does not} sample
from the GP at all and completely side-steps this more costly approach.

\begin{figure}
\begin{centering}
\includegraphics[scale=0.85]{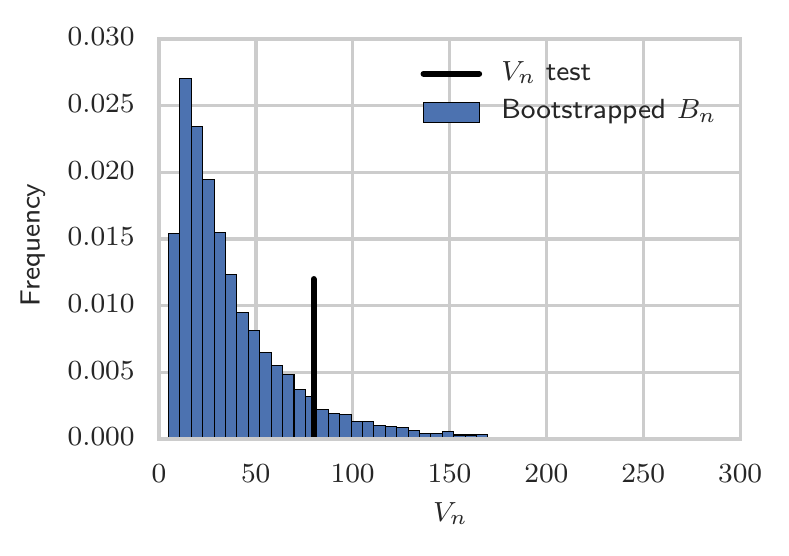}
\par\end{centering}

\caption{Bootstrapped $B_{n}$ distribution with the test statistic $V_{n}$
marked.\label{fig:experiment_gp_test} }
\end{figure}

\selectlanguage{english}%

\subsubsection*{Bias quantification in Approximate MCMC}

We now illustrate how to quantify\foreignlanguage{british}{ bias-variance
trade-offs in an approximate }MCMC algorithm\foreignlanguage{british}{
-- }austerity MCMC \citep{korattikara2013austerity}\foreignlanguage{british}{.
}For the purpose of illustration we use a simple generative model
from \citet{gorham2015measuring,Welling2011}, 
\begin{align*}
\theta_{1}\sim{\cal N}(0,10);\theta_{2}\sim{\cal N}(0,1)\\
X_{i}\sim\frac{1}{2}{\cal N}(\theta_{1},4)+\frac{1}{2}{\cal N}(\theta_{2}+\theta_{1},4) & .
\end{align*}
\begin{figure}
\centering{}\includegraphics{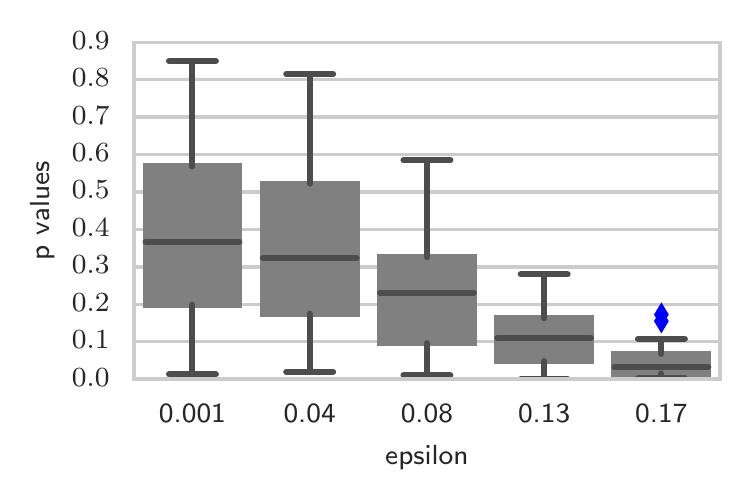}\foreignlanguage{british}{\caption{\selectlanguage{english}%
Distribution of p-values as a function of $\epsilon$ for austerity
MCMC. \label{p-values}\selectlanguage{british}%
}
}
\end{figure}
\foreignlanguage{british}{}
\begin{figure}
\centering{}\includegraphics{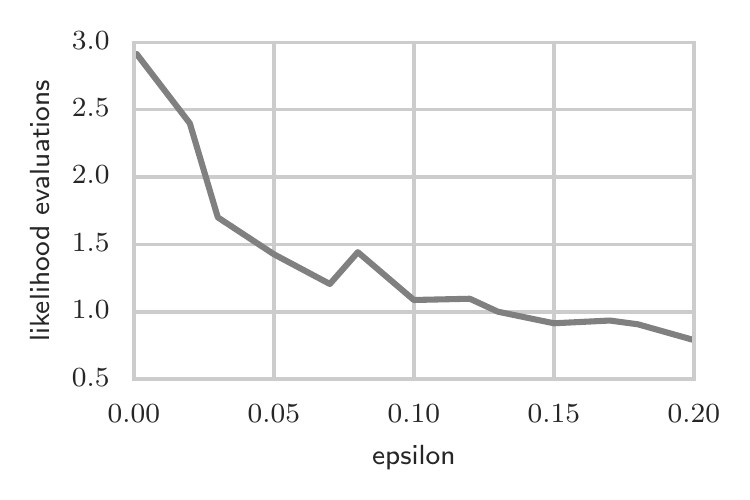}\foreignlanguage{british}{\caption{\selectlanguage{english}%
Average number of likelihood evaluations a function of $\epsilon$
for austerity MCMC (the y-axis is in millions of evaluations). \label{lik-evals}\selectlanguage{british}%
}
}
\end{figure}
Austerity MCMC is a Monte Carlo procedure designed to reduce the number
of likelihood evaluation in the acceptance step of the Metropolis-Hastings
algorithm. The crux of method is to look at only a subset of the data,
and make an acceptance/rejection decision based on this subset. The
probability of making a wrong decision is proportional to a parameter
$\epsilon\in[0,1]$ . This parameter influences the time complexity
of austerity MCMC: when $\epsilon$ is larger, i.e., when there is
a greater tolerance for error, the expected computational cost is
lower. We simulate $\{X_{i}\}_{1\leq i\leq400}$ points from the model
with $\theta_{1}=0$ and $\theta_{2}=1$. In our experiment, there
are two modes in the posterior distribution: one at $(0,1)$ and the
other at $(1,-1)$. We run the algorithm with $\epsilon$ varying
over the range $[0.001,0.2]$. For each $\epsilon$ we calculate an
individual thinning factor, such that correlation between consecutive
 samples from the chains is smaller than $0.5$ (greater $\epsilon$
generally required more thinning). For each $\epsilon$ we test the
hypothesis that $\{\theta_{i}\}_{1\leq i\leq500}$ is drawn from the
true stationary posterior, using our goodness of fit test. We generate
100 p-values for each $\epsilon$ , as shown in Figure \ref{p-values}.
A good approximation of the true stationary distribution is obtained
at $\epsilon=0.4$, which is still parsimonious in terms of likelihood
evaluations, as shown in Figure \ref{lik-evals}. 

\selectlanguage{british}%

\subsubsection*{Convergence in non-parametric density estimation}

\begin{figure}
\begin{centering}
\includegraphics{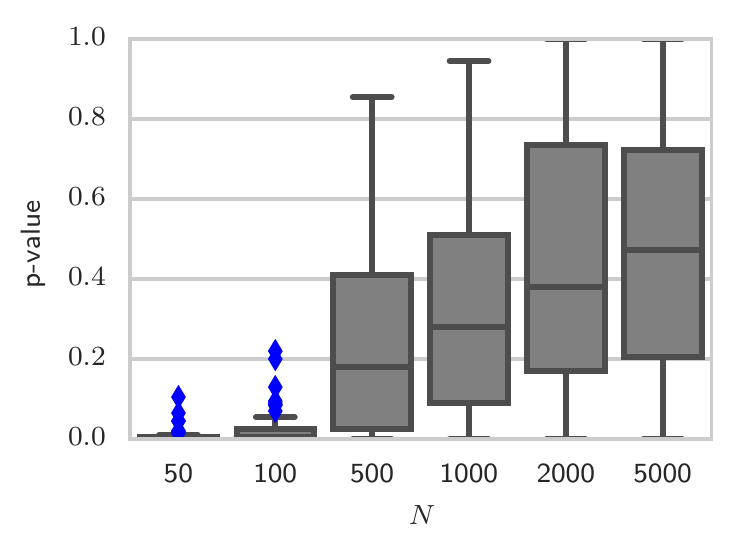}
\par\end{centering}

\caption{Density estimation: p-values for an increasing number of data $N$
for the non-parametric model. Fixed $n=500$.}

\label{fig:density_estimation_increasing_data}
\end{figure}

In our final experiment, we apply our goodness of fit test to measuring
quality-of-fit in nonparametric density estimation. We evaluate two
density models: the infinite dimensional exponential family \citep{SriFukKumGreHyv14},
and a recent approximation to this model using random Fourier features
\citep{strathmann2015gradient}. Our implementation of the model assumes
the log density to take the form $f(x)$, where $f$ lies in an RKHS
induced by a Gaussian kernel with bandwidth $1$. We fit the model
using $N$ observations drawn from a standard Gaussian, and perform
our quadratic time test on a separate evaluation dataset of fixed
size $n=500$. Our goal is to identify $N$ sufficiently large that
the goodness of fit test does not reject the null hypothesis (i.e.,
the model has learned the density sufficiently well, bearing in mind
that it is guaranteed to converge for sufficiently large $N$). Figure
\ref{fig:density_estimation_increasing_data} shows how the distribution
of p-values evolves as a function of $N$; this distribution is uniform
for $N=5000$, but at $N=500$, the null hypothesis would very rarely
be rejected.

We next consider the random Fourier feature approximation to this
model, where the log pdf, $f$, is approximated using a finite dictionary
of random Fourier features \citep{Rahimi2007}. The natural question
when using this approximation is: ``How many random features are
needed?'' Using the same test set size $n=500$ as above, and a large
number of samples, $N=5\cdot10^{4}$, Figure \ref{fig:density_estimation_increasing_features}
shows the distributions of p-values for an increasing number of random
features $m$. From $m=50$, the null hypothesis would rarely be rejected.
Note, however, that the p-values do \emph{not} have a uniform distribution,
even for a large number of random features. This subtle effect is
caused by over-smoothing due to the regularisation approach taken
by \citet[KMC finite]{strathmann2015gradient}, which would not otherwise
have been detected.  

\begin{figure}
\begin{centering}
\includegraphics{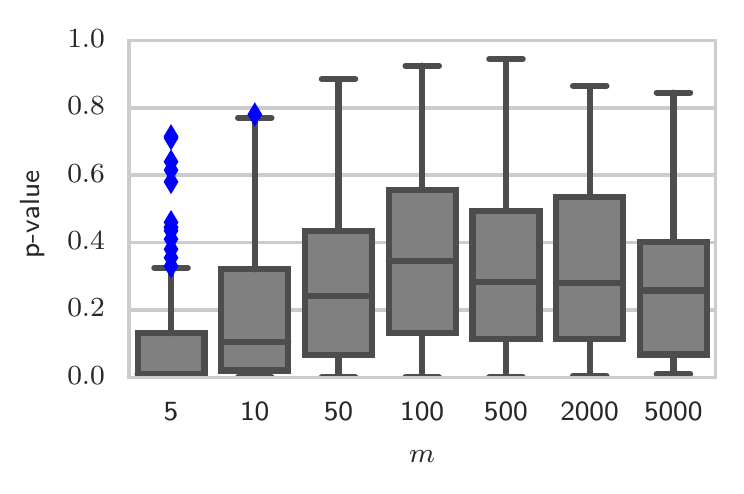}
\par\end{centering}

\caption{Approximate density estimation: p-values for an increasing number
of random features $m$. Fixed $n=500$.}

\label{fig:density_estimation_increasing_features}
\end{figure}

\FloatBarrier

\paragraph{Acknowledgement. }

Arthur Gretton has ORCID 0000-0003-3169-7624.\newpage{}

\bibliographystyle{icml2015}
\bibliography{biblio}

\pagebreak{}

\normalsize\onecolumn

\part*{Appendix}

\selectlanguage{english}%

\section{Proofs}
\begin{lem}
\label{lem:easy} If a random variable $X$ is distributed according
to $p$, under conditions on the kernel

\begin{align*}
0 & =\oint_{\partial\mathcal{X}}k(x,x')q(x)n(x)dS(x'),\\
0 & =\oint_{\partial\mathcal{X}}\nabla_{x}k(x,x')^{\top}n(x')q(x')dS(x'),
\end{align*}
 and then for all $f\in\mathcal{F}$, the expected value of $T$ is
zero, i.e. $\ev_{p}(Tf)(X)=0$.\end{lem}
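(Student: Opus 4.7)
The plan is to reduce $\mathbb{E}_p(T_p f)(X)$ to a surface integral via the divergence theorem, and then use the reproducing property together with the assumed boundary conditions on $k$ to show this integral vanishes.

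First, I would use the identity $\frac{\partial \log p(x)}{\partial x_i}\, p(x) = \frac{\partial p(x)}{\partial x_i}$ together with the ordinary product rule to recognise that
\begin{equation*}
(T_p f)(x)\, p(x) = \sum_{i=1}^d \left( \frac{\partial p(x)}{\partial x_i} f_i(x) + p(x) \frac{\partial f_i(x)}{\partial x_i}\right) = \nabla \cdot (p(x) f(x)).
\end{equation*}
Applying the divergence theorem on $\mathcal{X}$ then gives
\begin{equation*}
\ev_p (T_p f)(X) = \int_{\mathcal{X}} \nabla \cdot (p(x) f(x))\, dx = \oint_{\partial \mathcal{X}} p(x)\, f(x)^\top n(x)\, dS(x).
\end{equation*}

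Next I would pass from this boundary flux of $f$ to a statement about the kernel via the reproducing property. Writing $f_i(x) = \langle f_i, k(x,\cdot)\rangle_{\mathcal{F}}$ and interchanging the boundary integral with the RKHS inner product gives
\begin{equation*}
\oint_{\partial \mathcal{X}} p(x)\, f(x)^\top n(x)\, dS(x) = \sum_{i=1}^d \left\langle f_i,\; \oint_{\partial \mathcal{X}} p(x)\, k(x,\cdot)\, n_i(x)\, dS(x) \right\rangle_{\mathcal{F}}.
\end{equation*}
Each of the inner boundary integrals is zero by the first assumed boundary condition on $k$, so the entire expression vanishes. Since this holds for every $f \in \mathcal{F}^d$, we conclude $\ev_p(T_p f)(X) = 0$.

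The main technical obstacle is justifying the interchange of the boundary integral with the RKHS inner product: this requires a Bochner-integrability argument on $\partial \mathcal{X}$ of the same flavour as in the proof of Theorem~\ref{th:closed_form_discrepancy} (using $\|k(x,\cdot)\|_{\mathcal{F}} = \sqrt{k(x,x)}$ and a suitable boundedness/regularity hypothesis on $k$ and $p$ near $\partial \mathcal{X}$). The second listed boundary condition, involving $\nabla_x k$, is not required for this particular lemma; it is presumably stated alongside because it is needed in the companion manipulations of $\xi_p$ in Theorems~\ref{th:closed_form_discrepancy} and~\ref{theorem_discrepancy_is_metric}, where terms of the form $\nabla k(x,\cdot)$ appear explicitly.
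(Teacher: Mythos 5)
Your proof is correct and rests on the same underlying mechanism as the paper's --- integration by parts of $\nabla\cdot(p\,f)$ --- but the two arguments dispatch the boundary term differently. You apply the divergence theorem on a domain $\mathcal{X}$ with boundary and then use the reproducing property to pull the surface integral inside the RKHS inner product, so that the first stated kernel condition kills it; this is essentially the route of \citet[Lemma 1]{OatGirCho15}, which the paper cites for the bounded-domain case. The paper's own written proof instead treats the unbounded case $\mathcal{X}=\mathbb{R}^{d}$ by coordinate-wise integration by parts (with Fubini--Tonelli justifying the iterated integrals), and kills the boundary term not via the displayed surface-integral conditions but by showing directly that $p\cdot f_{i}$ vanishes at infinity: $p$ vanishes at infinity and $|f_{i}(x)|\le\|f_{i}\|\sqrt{k(x,x)}$ bounds $f_{i}$. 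Your approach buys generality over the domain and makes explicit which hypothesis does the work (together with the point, worth stating, that an element of $\mathcal{F}$ whose evaluations all vanish is the zero element, so the pointwise condition on the surface integral of $k$ does force the Bochner integral to vanish in $\mathcal{F}$); the paper's version buys an elementary, self-contained argument that needs no surface integrals at all. Your remark that the second condition (on $\nabla_{x}k$) is not needed for this lemma is consistent with the paper's proof, which likewise never uses it; and note that the conditions as printed contain apparent typos ($q$ where one expects $p$, and a mismatch between the integration variable $x'$ and the arguments of $q$ and $n$), so the charitable reading you adopt is the intended one.
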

\begin{proof}
This result was proved on bounded domains $\mathcal{X}\subset\mathbb{R}^{d}$
by \citet[Lemma 1]{OatGirCho15}, where $n(x)$ is the unit vector
normal to the boundary at $x$, and $\oint_{\partial\mathcal{X}}$
is the surface integral over the boundary $\partial\mathcal{X}$.
The case of unbounded domains was discussed by \citet[Remark 2]{OatGirCho15}.
Here we provide an alternative, elementary proof for the latter case.
First we show that the function $p\cdot f_{i}$ vanishes at infinity,
by which we mean that for all dimensions $j$ 
\[
\lim_{x_{j}\to\infty}p(x_{1},\cdots,x_{d})\cdot f_{i}(x_{1},\cdots,x_{d})=0.
\]
The density function $p$ vanishes at infinity. The function $f$
is bounded, which is implied by Cauchy-Schwarz inequality, $\left|f(x)\right|\le\left\Vert f\right\Vert \sqrt{k(x,x)}$.
This implies that the function $p\cdot f_{i}$ vanishes at infinity.
We check that the expected value $\ev_{p}(T_{p})f(X)$ is zero. For
all dimensions $i$, 
\begin{align*}
 & \ev_{p}(T_{p})f(X)\\
 & =\ev_{p}\left(\frac{\partial\log p(X)}{\partial x_{i}}f_{i}(X)+\frac{\partial f_{i}(X)}{\partial x_{i}}\right)\\
 & =\int_{R_{d}}\left[\frac{\partial\log p(x)}{\partial x_{i}}f_{i}(x)+\frac{\partial f_{i}(x)}{\partial x_{i}}\right]p(x)dx\\
 & =\int_{R_{d}}\left[\frac{1}{p(x)}\frac{\partial p(x)}{\partial x_{i}}f(x)+\frac{\partial f(x)}{\partial x_{i}}\right]p(x)dx\\
 & =\int_{R_{d}}\left[\frac{\partial p(x)}{\partial x_{i}}f_{i}(x)+\frac{\partial f_{i}(x)}{\partial x_{i}}p(x)\right]dx\\
 & \overset{(a)}{=}\int_{R_{d-1}}\left(\lim_{R\to\infty}p(x)f_{i}(x)\bigg|_{x_{i}=-R}^{x_{i}=R}\right)dx_{1}\cdots dx_{i-1}\cdots dx_{i+1}\cdots d{x_{d}}\\
 & =\int_{R_{d-1}}0dx_{1}\cdots dx_{i-1}\cdots dx_{i+1}\cdots dx_{d}\\
 & =0.
\end{align*}
For the equation (a) we have used integration by parts, the fact that
$p(x)f_{i}(x)$ vanishes at infinity, and the Fubini-Toneli theorem
to show that we can do iterated integration. The sufficient condition
for the Fubini-Toneli theorem is that $\ev_{q}\langle f,\xi_{p}(Z)\rangle^{2}<\infty$.
This is true since $\ev_{p}\|\xi_{p}(X)\|^{2}\leq\ev_{p}h_{p}(X,X)<\infty$. 
\end{proof}
\selectlanguage{british}%

\selectlanguage{english}%
\begin{proof}[Proof of \foreignlanguage{british}{proposition} \ref{thm: null_dist}]
We check assumptions of Theorem 2.1 from \citep{leucht2012degenerate}.
 Condition A1, $\sum_{t=1}^{\infty}\sqrt{\tau(t)}\leq\infty$, is
implied by assumption $\sum_{t=1}^{\infty}t^{2}\sqrt{\tau(t)}\leq\infty$
in Section \ref{sec:Details}. Condition A2 (iv), Lipschitz continuity
of $h$, is assumed. Conditions A2 i), ii) positive definiteness,
symmetry and degeneracy of $h$ follow from the proof of Theorem (\ref{theorem_discrepancy_is_metric}).
Indeed

\[
h_{p}(x,y)=\langle\xi_{p}(x),\xi_{p}(y)\rangle_{\mathcal{F}^{d}}
\]
\\
so the statistic is an inner product and hence positive definite.
Degeneracy under the null follows from the fact that, by Theorem \ref{th:closed_form_discrepancy},
$\ev_{q}\xi_{p}(Z)=0$. Finally, condition A2 (iii), $\ev_{p}h_{p}(X,X)\leq\infty$,
is assumed.
\end{proof}
\selectlanguage{british}%

\selectlanguage{english}%
\begin{proof}[Proof of \foreignlanguage{british}{proposition \ref{thm:wild_bootstrap_works}}]
We use Theorem 2.1 \citep{leucht2012degenerate} to see that, under
the null hypothesis, $f(Z_{1,n},\cdots,Z_{t,n})$ converges to zero
in probability. Condition A1, $\sum_{t=1}^{\infty}\sqrt{\tau(t)}\leq\infty$,
is implied by assumption $\sum_{t=1}^{\infty}t^{2}\sqrt{\tau(t)}\leq\infty$
in Section \ref{sec:Details}. Condition A2 (iv), Lipschitz continuity
of $h$, is assumed.. Assumption B1 is identical to our assumption
$\sum_{t=1}^{\infty}t^{2}\sqrt{\tau(t)}\leq\infty$ from Section \ref{sec:Details}.
Finally we check assumption B2 (bootstrap assumption):\emph{ $\{W_{t,n}\}_{1\leq t\leq n}$}
is a row-wise strictly stationary triangular array independent of
all $Z_{t}$ such that $\ev W_{t,n}=0$ and $\sup_{n}\ev|W_{t,n}^{2+\sigma}|=1<\infty$
for some $\sigma>0$. The autocovariance of the process is given by
$\ev W_{s,n}W_{t,n}=(1-2p_{n})^{-|s-t|}$, so the function $\rho(x)=\exp(-x)$,
and $l_{n}=\log(1-2p_{n})^{-1}$. We verify that $\lim_{u\to0}\rho(u)=1$.
If we set $p_{n}=w_{n}^{-1}$ , such that $w_{n}=o(n)$ and $\lim_{n\to\infty}w_{n}=\infty$,
then $l_{n}=O(w_{n})$ and $\sum_{r=1}^{n-1}\rho(|r|/l_{n})=\frac{1-(1-2p_{n})^{n+1}}{p_{n}}=O(w_{n})=O(l_{n})$.
 Under the alternative hypothesis, $B_{n}$ converges to zero - we
use \citep[Theorem 2]{chwialkowski2014wild}, where the only assumption
$\ensuremath{\tau(r)=o(r^{-4})}$ is satisfied since $\sum_{t=1}^{\infty}t^{2}\sqrt{\tau(t)}\leq\infty$.
We check the assumption 
\[
\sup_{n}\sup_{i,j<n}\ev_{q}h_{p}(Z_{i},Z_{j})^{2}<\infty.
\]
We have $\ev_{q}h_{p}(Z,Z')^{2}\leq\left(\ev_{q}\|\xi_{p}(Z)\|^{2}\right)^{2}=\left(\ev_{q}h_{p}(Z,Z)\right)^{2}<\infty$
.

We show that under the alternative hypothesis, $V_{n}$ converges
to a positive constant -- using \citep[Theorem 3]{chwialkowski2014wild}.
The zero comportment of $h$ is positive since $S_{p}^{2(Z)}>0$.
We checked the  assumption $\sup_{n}\sup_{i,j<n}\ev_{q}h_{p}(Z_{i},Z_{j})^{2}<\infty$
above.

\end{proof}

\selectlanguage{british}%

\section{MCMC convergence testing\label{sec:MCMC-convergence-testing}}

\paragraph{}

\paragraph{Stationary phase.}

In the stationary phase there are number of results which might be
used to show that the chain is $\tau$-mixing.

\paragraph{Strong mixing coefficients.}

Strong mixing is historically the most studied type of temporal dependence -- a lot of models, including Markov Chains, are proved to be strongly mixing, therefore it's useful to relate weak mixing  to strong mixing. For a random variable $X$ on a probability space $(\Omega,\mathcal{F},P_X)$ and $\mathcal{M} \subset \mathcal{F}$ we define  \begin{equation*} \beta(\mathcal{M},\sigma(X)) = \| \sup_{A \in \mathbb{B}(R)} | P_{X|\mathcal{M}}(A) - P_X(A)|\|. \end{equation*} A process  is called $\beta$-mixing or absolutely regular if   \begin{align*} \beta(r) &= \sup_{l \in \mathbb{N}} \frac 1 l \sup_{ r \leq i_1 \leq ... \leq i_l} \beta( \mathcal F_0,(X_{i_1},...,X_{i_l}) )  \overset{r \to \infty}{\longrightarrow} 0.\ \end{align*}
  \citet{dedecker2005new}[Equation  7.6] relates $\tau$-mixing and $\beta$-mixing , as follows: if $Q_x$ is the generalized inverse of the tail function \[  Q_x(u) = \inf_{t \in R} \{  P(|X| > t) \leq u\},   \] then \[  \tau(\mathcal{M},X) \leq 2 \int_{0}^{\beta(\mathcal{M},\sigma(X))}  Q_x(u) du. \] While this definition can be hard to interpret, it can be simplified in the case $E|X|^p=M$  for some $p>1$, since via Markov's inequality $P(|X|>t) \leq \frac{M}{t^p}$, and thus $\frac{M}{t^p} \leq u $ implies $P(|X|>t) \leq u$. Therefore $Q'(u) = \frac{M}{\sqrt[p]{u}} \geq Q_x(u)$. As a result, we have the  inequality  \begin{equation} \label{eq:theBeta}  \frac{ \sqrt[p]{ \beta(\mathcal{M},\sigma(X))} }{ M }  \geq C  \tau(\mathcal{M},X).  \end{equation} 

\citet{dedecker2005new} provide examples of systems that are $\tau$-mixing.
In particular, given that certain assumptions are satisfied, then
causal functions of stationary sequences, iterated random functions,
Markov chains, and expanding maps are all $\tau$-mixing. 

Of particular interest to this work are Markov chains. The assumptions
provided by \citep{dedecker2005new}, under which Markov chains are
$\tau$-mixing, are difficult to check. We can, however, use classical
theorems about the absolute regularity ($\beta$-mixing). In particular
\citep[Corollary 3.6]{bradley_basic_2005} states that a Harris recurrent
and aperiodic Markov chain satisfies absolute regularity, and \citep[Theorem 3.7]{bradley_basic_2005}
states that geometric ergodicity implies geometric decay of the $\beta$
coefficient. Interestingly \citep[Theorem 3.2]{bradley_basic_2005}
describes situations in which a non-stationary chain $\beta$-mixes
exponentially. 

Using inequalities between $\tau$-mixing coefficient and strong mixing
coefficients, one can use these classical theorems show that e.g for
$p=2$ we have 
\[
\sqrt{\beta(\mathcal{M},\sigma(X))}\geq\tau(\mathcal{M},X).
\]

\end{document}